\newcommand{\argmax}{\operatornamewithlimits{arg\,max}}
\newcommand{\argmin}{\operatornamewithlimits{arg\,min}}
\DeclareMathAlphabet{\mathcal}{OMS}{cmsy}{b}{n}
\newtheorem{theorem}{Theorem}[section]
\newtheorem{lemma}[theorem]{Lemma}
\newenvironment{proof}[1][Proof]{\begin{trivlist}
\item[\hskip \labelsep {\bfseries #1}]}{\end{trivlist}}
\newcommand{\qed}{\nobreak \ifvmode \relax \else
      \ifdim\lastskip<1.5em \hskip-\lastskip
      \hskip1.5em plus0em minus0.5em \fi \nobreak
      \vrule height0.75em width0.5em depth0.25em\fi}
\title{Latent Fisher Discriminant Analysis}
\author{
Gang Chen 
\\
Department of Computer Science and Engineering \\
 SUNY at Buffalo\\
\texttt{gangchen@buffalo.edu} \\
}
\begin{document}

\maketitle

\begin{abstract}

Linear Discriminant Analysis (LDA) is a well-known method for dimensionality reduction and classification. Previous studies have also extended the binary-class case into multi-classes. However, many applications, such as object detection and keyframe extraction cannot provide consistent instance-label pairs, while LDA requires labels on instance level for training. Thus it cannot be directly applied for semi-supervised classification problem. In this paper, we overcome this limitation and 
propose a latent variable Fisher discriminant analysis model. We relax the instance-level labeling into bag-level, is a kind of semi-supervised (video-level labels of event type are required for semantic frame extraction) and  
incorporates a data-driven prior over the latent variables.  Hence, our method combines the latent variable inference and dimension reduction in an unified bayesian framework. We test  our method on MUSK and Corel data sets and yield competitive results compared to the baseline approach. We also demonstrate its capacity on the challenging TRECVID MED11 dataset for semantic keyframe extraction and conduct a human-factors ranking-based experimental evaluation, which clearly demonstrates our proposed method consistently extracts more semantically meaningful keyframes than challenging baselines.

\end{abstract}

\section{Introduction}

Linear Discriminant Analysis (LDA) is a powerful tool for dimensionality reduction and classification that projects high-dimensional data into a low-dimensional space where the data achieves maximum class separability \cite{Duda00,Fukunaga90}. The basic idea in classical LDA, known as the Fisher Linear Discriminant Analysis (FDA) is to obtain the projection matrix by minimizing the within-class distance and maximizing the between-class distance simultaneously to yield the maximum class discrimination. It has been proved analytically that the optimal transformation is readily computed by solving a generalized eigenvalue problem \cite{Fukunaga90}. In order to deal with multi-class scenarios \cite{Duda00}, LDA can be easily extended from binary-case and generally used to find a subspace with $d-1$ dimensions for multi-class problems, where $d$ is the number of classes in the training dataset. Because of its effectiveness and computational efficiency, it has been applied successfully in many applications, such as face recognition \cite{Belhumeur97} and microarray gene expression data analysis. Moreover, LDA was shown to compare favorably with other supervised dimensionality reduction methods through experiments \cite{Sugiyama10}.

However, LDA expects instance/label pairs which are surprisingly prohibitive especially for large training data. In the last decades, semi-supervised methods have been proposed to utilize unlabeled data to aid classification or regression tasks under situations with limited labeled data, such as Transductive SVM (TSVM) \cite{Vapnik98,Joachims99} and Co-Training \cite{Blum98}.
Correspondently, it is reasonable to extend the supervised LDA into a semi-supervised method, and many approaches \cite{Cai07,Zhang08,Sugiyama10} have been proposed. Most of these methods are based on transductive learning. In other words, they still need instance/label pairs. However, many real applications require bag-level labeling \cite{Andrews02}, such as object detection \cite{FeGiMcPAMI2010} and event detection \cite{2010trecvidover}.

In this paper, we propose a Latent Fisher Discriminant Analysis model (or LFDA in short), which generalizes Fisher LDA model \cite{Duda00}. Our model is inspired by MI-SVM \cite{Andrews02} or latent SVM \cite{FeGiMcPAMI2010} and multiple instance learning problems \cite{Dietterich97,Maron98}. On the one hand, recently applications in image and video analysis require a kind of bag-level label. Moreover, using latent variable model for this kind of problem shows great improvement on object detection \cite{FeGiMcPAMI2010}. On the other hand, the requirement of instance/label pairs in the training data is surprisingly prohibitive especially for large training data. The bag-level labeling methods are a good solution to this problem. 

MI-SVM or latent SVM is a kind of discriminative model by maximizing a posterior probability. 
Our model unify the discriminative nature of the Fisher linear discriminant with a data driven Gaussian mixture prior over the training data in the Bayesian framework. By combining these two terms in one model, we infer latent variables and projection matrix in alternative way until convergence. We demonstrate this capability on MUSK and Corel data sets for classification, and on TRECVID MED11 dataset for keyframe extraction on five video events \cite{2010trecvidover}.  


\section{Related Work}
\label{sec:related}
Linear Discriminant Analysis (LDA) has been a popular method for dimension reduction and classification. It searches a projection matrix that simultaneously maximizes the between-class dissimilarity and minimizes the within-class dissimilarity to increase class separability, typically for classification applications. LDA has attracted an increasing amount of attention in many applications because of its effectiveness and computational efficiency. Belhumeur et al proposed PCA+LDA \cite{Belhumeur97} for face recognition. Chen et al projects the data to the null space of the within-class scatter matrix and then maximizes the between-class scatter in this space \cite{Chen00} to deal with the situation when the size of training data is smaller than the dimensionality of feature space. \cite{Wang04} combines the ideas above, maximizes the between-class scatter matrix in the range space and the null space of the within-class scatter matrix separately and then integrates the two parts together to get the final transformation.  \cite{Ye05} is also a two-stage method which can be divided into two steps: first project the data to the range space of the between-class scatter matrix and then apply LDA in this space. To deal with non-linear scenarios, the kernel approach \cite{Vapnik98} can be applied easily via the so-called kernel trick to extend LDA to its kernel version, called kernel discriminant analysis \cite{Baudat00}, that can project the data points nonlinearly. Recently, sparsity induced LDA is also proposed \cite{Francisco12}.

However, many real-world applications only provide labels on bag-level, such as object detection and event detection. LDA, as a classical supervised learning method, requires a training dataset consisting of instance and label pairs, to construct a classifier that can predict outputs/labels for novel inputs. However, directly casting LDA as a semi-supervised method is challenging for multi-class problems. Thus, in the last decades, semi-supervised methods become a a hot topic. 
One of the main trend is to extend the supervised LDA into a semi-supervised method \cite{Cai07,Zhang08,Sugiyama10}, which attempts to utilize unlabeled data to aid classification or regression tasks under situations with limited labeled data. \cite{Cai07} propose a novel method, called Semi-supervised Discriminant Analysis, which makes use of both labeled and unlabeled samples. The labeled data points are used to maximize the separability between different classes and the unlabeled data points are used to estimate the intrinsic geometric structure of the data. \cite{Sugiyama10} propose a semi-supervised dimensionality reduction method which preserves the global structure of unlabeled samples in addition to separating labeled samples in different classes from each other. 
Most of these semi-supervised methods model the geometric relationships between all data points in the form of a graph and then propagate the label information from the labeled data points through the graph to the unlabeled data points. Another trend prefers to extent LDA into an unsupervised senarios. For example, Ding et al propose to combine LDA and K-means clustering into the LDA-Km algorithm \cite{Ding07} for adaptive dimension reduction. In this algorithm, K-means clustering is used to generate class labels and LDA is utilized to perform subspace selection.

Our solution is a new latent variable model called Latent Fisher Discriminant Analysis (LFDA), which complements 
existing latent variable models that have been popular in the recent vision literature \cite{FeGiMcPAMI2010} by making it possible to include the latent variables into Fisher discriminant analysis model. Unlike previous latent SVM \cite{FeGiMcPAMI2010} or MI-SVM \cite{Andrews02} model, we extend it with prior data distribution to maximize a joint probability when inferring latent variables. Hence, our method combines the latent variable inference and dimension reduction in an unified Bayesian framework. 
\section{Latent Fisher discriminant analysis}
\label{sec:lfda}


We propose a LFDA model by including latent variables into the Fisher 
discriminant analysis model. 
Let ${\mathcal{X}}=\{{\bf x_{1}}, {\bf x_{2}},...,{\bf x_{n}}\}$  
represent $n$ bags, and the corresponding labels ${\mathcal{L}}=\{l_{1}, l_{2},...,l_{n} \}$. For each ${\bf 
x_{i}} \in \mathcal{X}$, ${\bf x_{i}}$ can be treated as a bag (or video)
in \cite{Andrews02}, and its label $l_{i}$ is categorical and assumes 
values in a finite set, e.g. $\{1,2,..., C\}$.  Let ${\bf x_{i}} \in 
\mathbb{R}^{d \times n_{i}}$, which means it contains $n_{i}$ 
instances (or frames), ${\bf x_{i}} = 
\{x_{i}^{1},x_{i}^2,...,x_{i}^{n_{i}}\}$, and its $j^{th}$ instance is 
a vector in $\mathbb{R}^{d}$, namely $x_{i}^j\in \mathbb{R}^{d}$. Fisher's linear 
discriminant analysis pursue a subspace $\mathcal{Y}$ to 
separate two or more classes. In other words, for any instance $x \in 
\mathcal{X}$, it searches for a projection $f: x\rightarrow y$, where 
$y \in  \mathbb{R}^{d'}$ and $d' \le d$. In general, $d'$ is decided 
by $C$, namely $d' = C -1$. Suppose the projection matrix is 
$\mathcal{P}$, and $y = f(x) =\mathcal{P}  x$, then latent Fisher LDA 
proposes to minimize the following ratio:
%
%
\begin{align}\label{eq:Eq1}
\centering
({\mathcal{P}^*}) = \argmin_{\mathcal{P},z}J({\mathcal{P}, z}) 
=\argmin_{\mathcal{P},z} 
\textrm{trace}  \left(  \frac{\mathcal{P}^{T} 
\Sigma_{w}(x,z) \mathcal{P}}{\mathcal{P}^{T}\Sigma_{b} (x,z) \mathcal{P}}  + \beta \mathcal{P}^{T}\mathcal{P} \right)&
\end{align}
where $z$ is the latent variable, $\beta$ is an weighting parameter for regularization term. The set $z\in Z(x)$ defines the possible latent values for a sample $x$. In our case, $z \in \{1,2,..., C\}$. $\Sigma_{b}(x,z)$ is between 
class scatter matrix and $\Sigma_{w}(x,z)$ is within class scatter 
matrix. However, LDA is dependent on a categorical variable (i.e. the 
class label) for each instance $x$ to compute $\Sigma_{b}$ and $\Sigma_{w}$.  In our case, we only know bag-level labels, not on instance-level labels.  
To minimize $J({\mathcal{P}}, z)$, we need to solve $z(x)$ for any 
given $x$. This problem is a chicken and egg problem, and can be 
solved by alternating algorithms, such as EM \cite{Dempster77}. In 
other words, solve $\mathcal{P}$ in Eq. (\ref{eq:Eq1}) with fixed $z$, and vice versa in an 
alternating strategy. 

\subsection{Updating $z$}
Suppose we have found the projection matrix $\mathcal{P}$, and 
corresponding subspace $\mathcal{Y}=\mathcal{P}\mathcal{X}$, where 
$\mathcal{Y}= \{{\bf y_{1}}, {\bf y_{2}},...,{\bf y_{n}}\}$. Instead of inferring latent variables at instance-level in latent SVM, we propose latent variable inference at clustering-level in the projected space $\mathcal{Y}$. That means all elements in the same cluster have same label. Such assumption is reasonable because elements in the same cluster are close to each other. On the other hand, cluster-level inference can speed up the learning process. 
We extend mixture discriminative analysis model in \cite{Hastie96} by incorporating latent variables over all instances for an given class. As in \cite{Hastie96}, we assume each class $i$ is a $K$ components of Gaussians, 
\begin{equation}\label{eq:Eq4}
\centering
p(x|\lambda_{i})=\sum_{j=1}^K\pi_{i}^{j}g(x| \mu_{i}^{j}, \Sigma_{i}^{j})
\end{equation}
where $x$ is a $d$-dimensional continuous-valued data vector (i.e. measurement or features); $\pi_{i} = \{\pi_{i}^{j}\}_{j = 1}^{K}$ are the mixture weights, and $g(x| \mu_{i}^{j}, \Sigma_{i}^{j})$, $j \in [1,K]$, are the component Gaussian densities with $\mu_{i} = \{\mu_{i}^{j} \}_{j=1}^K$ as mean and $\Sigma_{i} = \{\Sigma_{i}^{j}\}_{j=1}^K$ as covariance. $\lambda_{i}$ is the parameters for class $i$ which we need to estimate, $\lambda_{i}=\{\pi_{i}, \mu_{i}, \Sigma_{i}\}$.

Hence, for each class $i \in \{1,2,\dots,C\}$, we can estimate $\lambda_{i}$ and get its $K$ subsets $S_{i}= \{S_{i}^1,S_{i}^2,...,S_{i}^K\}$ with EM algorithm. 
Suppose we have the discriminative weights (or posterior probability) for the $K$ centers in each class, $w_{i} =\{w_{i}^1,w_{i}^2,...,w_{i}^K\}$, which are the posterior probability determined by the latent FDA and will be discussed later. We maximize one of the following two equations:

\begin{subequations}\label{eq:Eq3}
Maximizing a posterior probability:
\begin{align}\label{a}
\centering
        \mu_{i}^j = \argmax_{\mu_{i}^{j} \in \mu_{i}, j \in [1,K]} w_{i}  = \argmax_{\mu_{i}^{j} \in \mu_{i}, j \in[1,K]}  p(z_{i} | \mu_{i}, \mathcal{P})
\end{align}

Maximizing the joint probability with prior:
\begin{align}\label{b}
\centering
\mu_{i}^j  = \argmax_{\mu_{i}^{j} \in \mu_{i}, j \in[1,K]} (\pi_{i} \circ w_{i})
\end{align}
\end{subequations}
where $z_{i}$ is the latent label assignment, $\pi_{i}$ is the prior clustering distributions for $\lambda_{i}$ in class $i$, $w_{i}$ is the posterior (or weight) determined by kNN voting (see further) in the subspace and $\circ$ is the pointwise production or Hadamard product. 
We treat Eq. (\ref{a}) as the latent Fisher discriminant analysis model (LFDA), because it takes the same strategy as the latent SVM model \cite{Andrews02,FeGiMcPAMI2010}. As for Eq. (\ref{b}), we extend LFDA by combining the both factors (representative and discriminative) together, and find the cluster $S_{i}^{j}$ in class $i$ by maximizing Eq. (\ref{b}). In a sense, Eq. (\ref{b}) considers the prior distribution from the training dataset,
thus, we treat it as the joint latent Fisher discriminant analysis model (JLFDA) or LFDA with prior. In the nutshell, we propose a way to formulate discriminative and generative methods under the unified Bayesian framework. 
We comparatively analyze both of these models (Section \ref{sec:experiments}).

Consequently, if we select the cluster $S_{i}^j$ with the mean 
$\mu_{i}^j$ which maximizes the above equation for class $i$, we can relabel all 
samples $x$ positive for class $i$ and the rest negative, subject to  $y={\mathcal{P}}x$ and $y \in S_{i}^{j}$. 
Then, we construct a new training data ${\mathcal{X^+}}=\{{\bf x_{1}^+}, {\bf x_{2}^+},...,{\bf x_{n}^+}\}$, with labels ${\mathcal{L^+}}=\{{\bf z_{1}^+}, {\bf z_{2}^+},...,{\bf z_{n}^+}\}$, where ${\bf x_{i}^+}= S_{i}^{j}$ for class $i$ with $n_{i}^{j'}$ elements, and its labels ${\bf z_{i}^+} = \{z_{i}^1, z_{i}^2,...,z_{i}^{n_{i}^{j'}}\} $ on instance level. Obviously, ${\bf x_{i}^+} \subseteq {\bf x_{i}}$ and ${\mathcal{X^+}}$ is a subset of ${\mathcal{X}}$. The difference between ${\mathcal{X^+}}$ and ${\mathcal{X}}$ lies that every element $x_{i}^+\in{\bf x_{i}^+}$ has label $z(x_{i}^+)$ decided by Eq. (\ref{eq:Eq3sub}), while ${\bf x_{i}} \subset \mathcal{X}$ only has bag level label.

\subsection{Updating projection $\mathcal{P}$}

When we have labels for the new training data ${\mathcal{X^+}}$, we use the Fisher LDA to minimize $J({\mathcal{P}}, z)$. Note that Eq. (\ref{eq:Eq1}) is invariant to the scale of the vector $\mathcal{P}$. Hence, we can always choose $\mathcal{P}$ such that the denominator is simply  $\mathcal{P}^{T} \Sigma_{b} \mathcal{P} = {\bf 1}$. For this reason we can transform the problem of minimizing Eq. (\ref{eq:Eq1}) into the following constrained optimization problem
\cite{Duda00,Fukunaga90,Ye07}:
\begin{align}\label{eq:Eq4}
\centering
 {\mathcal P^*} & =\argmin_{{\mathcal P}}    \textrm{trace}  \left( \mathcal{P}^{T}\Sigma_{w} (x,z) \mathcal{P}  + \beta \mathcal{P}^{T}\mathcal{P} \right) & \nonumber \\
& \quad \textrm{s. t. }  {\mathcal{P}^{T} \Sigma_{b}(x,z) \mathcal{P}} ={\bf 1} &
\end{align}
where {\bf 1} is the identity matrix in $\mathbb{R}^{d'\times d'}$. 
The optimal Multi-class LDA consists of the top eigenvectors of $(\Sigma_{w}(x,z) + \beta)^{\dagger}\Sigma_{b} (x,z)$ corresponding to the nonzero eigenvalues \cite{Fukunaga90}, here $(\Sigma_{w}(x,z)+\beta)^{\dagger}$ denotes the pseudo-inverse of $\Sigma_{w}(x,z) + \beta$. After we calculated $\mathcal{P}$, we can project ${\mathcal{X^+}}$ into subspace ${\mathcal{Y^+}}$. Note that in the subspace ${\mathcal{Y^+}}$, any $y^+ \in{\mathcal{Y^+}}$ preserves the same labels as in the original space. In other words, ${\mathcal{Y^+}}$ has corresponding labels ${\mathcal{L^+}}$ at element level, namely $z(y^+) = z(x^+)$.

In general, multi-class LDA \cite{Ye07} uses kNN to classify new input 
data. We compute $w_{i}$ using the following kNN strategy: for each 
sample $x \in {\mathcal{X}}$, we get $y = \mathcal{P}x$ by projecting it into subspace ${\mathcal{Y}}$. Then, for $y\in {\mathcal{Y}}$, we choose its $N$ nearest neighbors from ${\mathcal{Y^+}}$, and use their labels to voting each cluster $S_{i}^j$ in each class $i$. Then, we compute the following posterior probability: 
\begin{align}\label{eq:Eq5}
\centering
w_{i}^{j} & = p(z_{i}=1| \mu_{i}^j) = p(\mu_{i}^j | z_{i}=1) p(z_{i}=1) & \nonumber \\
& =  p(z_{i}=1) \frac{p(\mu_{i}^j, z_{i}=1)}{\sum_{i=1}^C p(\mu_{i}^j, z_{i}=1)} &
\end{align}
It counts all $y \in {\mathcal{Y}}$ fall into $N$ nearest neighbor of $\mu_{i}^{j}$ with label $z_{i}$. Note that
kNN is widely used as the classifier in the subspace after LDA 
transformation. Thus, Eq. (\ref{eq:Eq5}) consider all training data to 
vote the weight for each discriminative cluster $S_{i}^j$ in every 
class $i$. Hence, we can find the most discriminative cluster $S_{i}^{j}$, s.t. $w_{i}^{j} > w_{i}^k$, $k \in [1, K], k \neq j$. 

\textbf{Algorithm.}
We summarize the above discussion in pseudo code. To put simply, we update ${\mathcal{P}}$ and $z$ in an alternative manner, and accept the new projection matrix ${\mathcal{P}}$ with LDA on the relabeled instances. Such algorithm can always convenge in around 10 iterations. After we learned matrix $\mathcal{P}$ and $\{\lambda_{i}\}_{i=1}^{C}$ by maximizing Eq. ({\ref{eq:Eq3}), we can use them to select representative and discriminative frames from video datasets by nearest neighbor searching.
\begin{algorithm}[h!]
  \footnotesize
\caption{} 
\label{alg1}
\textbf{Input}: training data ${\mathcal{X}}$ and its labels $\mathcal{L}$ at video level, $\beta$, $K$, $N$, $T$ and $\epsilon$.\\
\textbf{Output}: $\mathcal{P}$, $\{\lambda_{i}\}_{i=1}^{C}$
\begin{algorithmic}[1]
\STATE Initialize $\mathcal{P}$ and $w_{i}$;
\FOR{$Iter=1$ to $T$}
\FOR{$i=1;  i<=C;  i++$}
\STATE Project all the training data ${\mathcal{X}}$ into subspace ${\mathcal{Y}}$ using ${\mathcal{Y}}$=$\mathcal{P}{\mathcal{X}}$;
\STATE For each class $i \in [1,C]$, using Gaussian mixture model to partition its elements in the subspace, and compute $\lambda_{i}$ = $\{\pi_{i}, \mu_{i}, \Sigma_{i}\}$;
\STATE Maximize Eq. (\ref{eq:Eq3}) to find $S_{i}^j$ with center $\mu_{i}^j$;
\STATE Relabel all elements positive in the cluster $S_{i}^j$ for class $i$ according to Eq. (\ref{eq:Eq3sub});
\ENDFOR
\STATE Update $z$ and construct the new subset ${\mathcal{X^+}}$ and its labels $\mathcal{L}^+$ for all $C$ classes;
\STATE Do Fisher linear discriminant analysis and update $\mathcal{P}$
\STATE if $\mathcal{P}$ converge (change less than $\epsilon$), then break 
\STATE Compute $N$ nearest neighbors for each training data, and calculate discriminative weight $w_{i}$ for each class $i$ according to Eq. (\ref{eq:Eq5}).
\ENDFOR
\STATE Return ${\mathcal{P}}$ and cluster centers $\{\lambda_{i} \}_{i=1}^C$ learned respectively for all $C$ classes;
\end{algorithmic}
\end{algorithm}

\subsection{Convergence analysis}
Our method updates latent variable $z$ and then $\mathcal{P}$ in an alternative manner. Such strategy can be attributed to the hard assignment of EM algorithm. Recall the EM approach: 
\begin{align}\label{eq:Eq7}
\centering
\mathcal{P^*} = \argmax_{\mathcal{P}} p(\mathcal{X}, \mathcal{L} | \mathcal{P})
= \argmax_{\mathcal{P}} \sum_{i=1}^C p(\mathcal{X}, \mathcal{L} | z_{i}) p(z_{i}| \mathcal{P})
\end{align}
then the likelihood can be optimized using iterative use of the EM algorithm.
\begin{theorem}
Assume the latent variable $z$ is inferred for each instance in $\mathcal{X}$, then to maximize the above function is equal to maximize the following auxiliary function 
\begin{align}\label{eq:Eq8}
\centering
\mathcal{P} = \argmax_{\mathcal{P}} \sum_{i=1}^C p(z_{i} | \mathcal{X}, \mathcal{L}, \mathcal{P}^\prime) ln \bigg(p(\mathcal{X}, \mathcal{L} | z_{i}) p(z_{i} | \mathcal{P})\bigg)
\end{align}
\end{theorem}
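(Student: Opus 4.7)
The plan is to derive the stated auxiliary function via the standard EM lower-bound argument, showing that each iteration of maximizing the auxiliary objective monotonically increases (and hence, in the limit, maximizes) the marginal likelihood $p(\mathcal{X},\mathcal{L}\mid\mathcal{P})$.

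First I would take logarithms of Eq.~(\ref{eq:Eq7}) and introduce an arbitrary distribution $q(z_i)$ over the latent class assignment by the usual trick of multiplying and dividing inside the sum:
\begin{equation*}
\ln p(\mathcal{X},\mathcal{L}\mid\mathcal{P}) = \ln\sum_{i=1}^{C} q(z_i)\,\frac{p(\mathcal{X},\mathcal{L}\mid z_i)\,p(z_i\mid\mathcal{P})}{q(z_i)}.
\end{equation*}
Then I would apply Jensen's inequality, exploiting concavity of $\ln$, to pull the sum outside the logarithm and obtain the lower bound
\begin{equation*}
\ln p(\mathcal{X},\mathcal{L}\mid\mathcal{P}) \;\ge\; \sum_{i=1}^{C} q(z_i)\,\ln\bigl(p(\mathcal{X},\mathcal{L}\mid z_i)\,p(z_i\mid\mathcal{P})\bigr) \;-\; \sum_{i=1}^{C} q(z_i)\ln q(z_i).
\end{equation*}
The second term depends only on $q$, not on $\mathcal{P}$, so maximizing in $\mathcal{P}$ is equivalent to maximizing the first term, which is precisely the auxiliary function in Eq.~(\ref{eq:Eq8}) once we identify $q(z_i)=p(z_i\mid\mathcal{X},\mathcal{L},\mathcal{P}')$.

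Next I would justify that particular choice of $q$ by showing that the Jensen bound is tight at $\mathcal{P}=\mathcal{P}'$. The gap between $\ln p(\mathcal{X},\mathcal{L}\mid\mathcal{P})$ and the lower bound equals $\mathrm{KL}\!\left(q(z)\,\|\,p(z\mid\mathcal{X},\mathcal{L},\mathcal{P})\right)$, which is nonnegative and vanishes iff $q(z_i)=p(z_i\mid\mathcal{X},\mathcal{L},\mathcal{P})$. Setting $q(z_i)=p(z_i\mid\mathcal{X},\mathcal{L},\mathcal{P}')$ thus makes the bound touch the true log-likelihood at the current iterate $\mathcal{P}'$. Consequently, if $\mathcal{P}$ is any maximizer of the auxiliary function, the resulting log-likelihood satisfies $\ln p(\mathcal{X},\mathcal{L}\mid\mathcal{P}) \ge \ln p(\mathcal{X},\mathcal{L}\mid\mathcal{P}')$, which is the monotonicity guarantee that justifies the equivalence claim.

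The only delicate step is the \emph{equivalence} phrasing: strictly speaking, maximizing Eq.~(\ref{eq:Eq8}) once does not globally maximize Eq.~(\ref{eq:Eq7}); it only produces a non-decreasing sequence of likelihood values when iterated with re-estimated posteriors. I would therefore state the equivalence in the iterative/fixed-point sense: at any stationary point of the EM iteration, $\mathcal{P}=\mathcal{P}'$, and the auxiliary function and $\ln p(\mathcal{X},\mathcal{L}\mid\mathcal{P})$ have coincident gradients (since their difference is the KL divergence, which attains its minimum value zero at that point, so its gradient in $\mathcal{P}$ vanishes). Hence the maximizers of the two objectives coincide, which is what the theorem asserts and what legitimizes the alternating update of $z$ and $\mathcal{P}$ used earlier in the algorithm.
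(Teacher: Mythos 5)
Your proposal is correct and follows essentially the same route the paper intends: the paper's entire proof is the one-line remark that the claim ``can be shown using Jensen's inequality,'' and your argument is exactly that standard EM derivation (Jensen lower bound, KL-divergence gap, tightness at $\mathcal{P}=\mathcal{P}'$, monotonicity). Your closing caveat---that the ``equal to maximize'' claim really holds only in the iterative/fixed-point sense rather than as a one-shot global equivalence---is a legitimate and worthwhile sharpening of a point the paper glosses over.
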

This proof can be shown using Jensen's inequality.
\begin{lemma} The hard assignment of latent variable $z$ by maximizing Eq. (\ref{eq:Eq3}) is a special case of EM algorithm.
\end{lemma}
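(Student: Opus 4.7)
The plan is to derive hard EM (sometimes called classification EM or Viterbi EM) from the soft EM update given in Theorem 3.1 and then identify its hard E-step with the two update rules in Eq. (\ref{eq:Eq3}).

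First, I would recall the EM lower-bound argument already used in the theorem: for any distribution $q(z_i)$ over the latent class, Jensen's inequality gives
\begin{equation*}
\ln p(\mathcal{X},\mathcal{L}\mid\mathcal{P}) \;\ge\; \sum_{i=1}^{C} q(z_i)\,\ln\frac{p(\mathcal{X},\mathcal{L}\mid z_i)\,p(z_i\mid\mathcal{P})}{q(z_i)},
\end{equation*}
with equality when $q(z_i)=p(z_i\mid\mathcal{X},\mathcal{L},\mathcal{P}^{\prime})$. Classical soft EM takes this maximizing choice of $q$, and the resulting M-step is precisely Eq. (\ref{eq:Eq8}). Hard EM, by contrast, restricts $q$ to degenerate distributions, $q(z_i)=\mathbf{1}[z_i=z_i^{*}]$, and selects $z_i^{*}=\argmax_{z_i} p(z_i\mid\mathcal{X},\mathcal{L},\mathcal{P}^{\prime})$. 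Under this restriction, the sum in (\ref{eq:Eq8}) collapses to the single term $\ln\!\bigl(p(\mathcal{X},\mathcal{L}\mid z_i^{*})\,p(z_i^{*}\mid\mathcal{P})\bigr)$, so the subsequent M-step maximizes this log-joint over $\mathcal{P}$, which is exactly the Fisher step of Section 3.2.

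Second, I would match the hard E-step to the two cases in Eq. (\ref{eq:Eq3}). For the posterior form, the MAP latent assignment $\argmax_{z_i} p(z_i\mid\mathcal{X},\mathcal{L},\mathcal{P}^{\prime})$ reduces, cluster by cluster, to $\argmax_{\mu_i^j} p(z_i\mid\mu_i,\mathcal{P})$, because $\mu_i^j$ is a sufficient statistic for the $j$-th Gaussian component of class $i$; this coincides with the first update in Eq. (\ref{eq:Eq3}), using $w_i^j=p(z_i=1\mid\mu_i^j,\mathcal{P})$ as computed by kNN voting in Eq. (\ref{eq:Eq5}). For the joint form, Bayes' rule gives $p(z_i\mid\mu_i)\propto p(\mu_i\mid z_i)\,p(z_i)$; identifying $\pi_i^j$ with the data-driven mixture prior and $w_i^j$ with the (unnormalized) likelihood, the Hadamard product $\pi_i\circ w_i$ is proportional to the full posterior, so its maximizer is again the hard-MAP assignment—now under a non-uniform prior, recovering the second update in Eq. (\ref{eq:Eq3}).

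The main obstacle is step one: one has to argue that restricting $q$ to point masses is a legitimate specialization that still leaves the overall procedure a coordinate-ascent on a well-defined objective, rather than an ad-hoc heuristic. I would make this precise by observing that, within the restricted family of point-mass $q$'s, the tightest lower bound on $\ln p(\mathcal{X},\mathcal{L}\mid\mathcal{P})$ is attained exactly by the argmax assignment; hence alternating the hard E-step with the LDA M-step is coordinate ascent on the same variational free energy used in Theorem 3.1, only taken over a smaller $q$-family. Both updates in Eq. (\ref{eq:Eq3}) are instances of this hard E-step—one corresponding to the pure posterior, the other to the posterior with an explicit data-driven prior—which establishes the lemma.
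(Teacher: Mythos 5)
Your proposal is correct, and it reaches the lemma by a somewhat different route than the paper. The paper starts from the auxiliary function of the theorem and decomposes it as $\sum_i p(z_i\mid\mathcal{X},\mathcal{L},\mathcal{P}^{\prime})\ln p(z_i\mid\mathcal{X},\mathcal{L},\mathcal{P}) + \ln p(\mathcal{X},\mathcal{L}\mid\mathcal{P})$, and then argues that under a hard assignment the first term is maximized by picking the posterior-maximizing class (invoking a loose ``$p\ln p$ is monotonically increasing'' claim); you instead restrict the variational family to point masses and observe that the tightest bound in that family is attained at the MAP assignment, so the alternation is coordinate ascent on the same free energy over a smaller $q$-family --- this is the standard classification-EM argument, and it is cleaner and more defensible than the paper's monotonicity step. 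The other difference is in how the two update rules of Eq.~(\ref{eq:Eq3}) are absorbed: the paper treats the cluster index as a second latent variable and appeals to an embedded (nested) EM over $\sum_{j}p(z_i,\pi_j\mid\mathcal{X},\mathcal{L},\mathcal{P}^{\prime})$, whereas you fold the mixture weight $\pi_i$ into a non-uniform prior and read Eq.~(\ref{b}) as a single MAP step; both readings are consistent with the paper's own ``joint probability'' interpretation of Eq.~(\ref{b}), though your labeling of $w_i^j$ as a likelihood is slightly at odds with the paper, which defines it as a kNN-voted posterior so that $\pi_i\circ w_i$ is the joint over (cluster, label). One peripheral overclaim on your side: asserting that the subsequent M-step over $\mathcal{P}$ ``is exactly the Fisher step of Section 3.2'' presumes a likelihood interpretation of the LDA trace objective that neither you nor the paper establishes; since the lemma concerns only the hard E-step for $z$, this does not affect the conclusion, but you should flag it as an assumption rather than a consequence.
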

\begin{proof}
\begin{align}\label{eq:Eq9}
\centering
& \sum_{i=1}^C p(z_{i} | \mathcal{X}, \mathcal{L}, \mathcal{P}^\prime) ln \bigg(p(\mathcal{X}, \mathcal{L} | z_{i}) p(z_{i} | \mathcal{P})\bigg) \nonumber \\
=& \sum_{i=1}^C p(z_{i} | \mathcal{X}, \mathcal{L}, \mathcal{P}^\prime) ln (p(z_{i} |\mathcal{X}, \mathcal{L}, P)) + \sum_{i=1}^C p(z_{i} | \mathcal{X}, \mathcal{L}, \mathcal{P}^\prime) ln (p(\mathcal{X}, \mathcal{L} | \mathcal{P})) \nonumber \\
= & \sum_{i=1}^C p(z_{i} | \mathcal{X}, \mathcal{L}, \mathcal{P}^\prime) ln (p(z_{i} |\mathcal{X}, \mathcal{L}, \mathcal{P})) +  ln (p(\mathcal{X}, \mathcal{L} | \mathcal{P}))
\end{align}
Given $\mathcal{P}^\prime$, we can infer the latent variable $z$. Because the hard assignment of $z$, the first term in the right hand side of Eq. (\ref{eq:Eq9}) assigns $z_{i}$ into one class. Note that $p(z | \mathcal{X}, \mathcal{L}, \mathcal{P}) ln (p(z |\mathcal{X}, \mathcal{L}, \mathcal{P}))$ is a monotonically increasing function, which means that by maximizing the posterior likelihood $p(z | \mathcal{X}, \mathcal{L}, \mathcal{P})$ for each instance, we can maximize Eq. (\ref{eq:Eq9}) for the hard assignment case in Eq. (\ref{eq:Eq3}). Thus, the updating strategy in our algorithm is a special case of EM algorithm, and it can converge into a local maximum as EM algorithm. Note that in our implement, we infer the latent variable in cluster level. In other words, to maximize $p(z_{i} | \mathcal{X}, \mathcal{L}, \mathcal{P}^\prime)$, we can include another latent variable $\pi_{j}, j\in[1,K]$. In other words, we need to maximize $\sum_{j=1}^K p(z_{i}, \pi_{j} | \mathcal{X}, \mathcal{L}, \mathcal{P}^\prime)$, which we can recursively determine the latent variable $\pi_{i}$ using an embedded EM algorithm. Hence, our algorithm use two steps of EM algorithm, and it can converge to a local maximum. Refer \cite{Wu83} for more details about the convergence of EM algorithm.
\end{proof}

\subsection{Probabilistic understanding for the model}
The latent SVM model \cite{FeGiMcPAMI2010,Andrews02} propose to label instance 
$x_{i}$ in positive bag, by maximize $p(z(x_{i}) =1| x_{i})$, which is the optimal 
Bayes decision rule. 
\begin{SCfigure}[\sidecaptionrelwidth][h!]
\centering
\begin{minipage}{0.5\linewidth}
\includegraphics[trim = 20mm 90mm 8mm 60mm, clip=true, width=15.0cm]{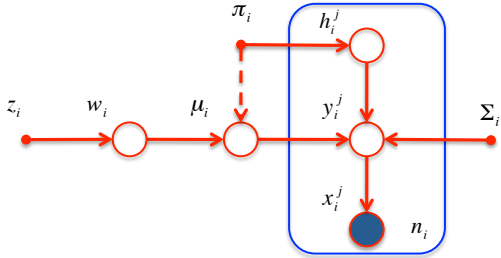}
\end{minipage}
\caption{Example of graphical representation only for one class (event). $h_{i}^j$ is the hidden variable, $x_{i}^j$ is the observable input, $y_{i}^j$ is the projection of $x_{i}^j$ in the subspace, $j \in[1,n_{i}]$, and $n_{i}$ is the number of total training data for class $i$. The $K$ cluster centers $\mu_{i} = \{\mu_{i}^1, \mu_{i}^2,...,\mu_{i}^{K}\}$ is determined by both $\pi_{i}$ and $w_{i}$. The graphical model of our method is similar to GMM model in vertical. By adding $z_{i}$ into LDA, the graphical model can handle latent variables.}
\label{fig:long}
\end{SCfigure}
Similarly, Eq. (\ref{a}) takes the same strategy as latent SVM to maximize a 
posterior probability. Moreover, instead of only maximizing the 
$p(z=1|x)$, we also maximize the joint probability $p(z =1, 
x)$, using the Bayes rule, $p(z =1, x) = p(x) p(z=1|x)$. In this 
paper, we use Gaussian mixture model to approximate the prior $p(x)$ in this generative model. 
We argue that to maximize a joint probability is reasonable, because it 
considers both discriminative (posterior probability) and 
representative (prior) property in the video dataset. We give the 
graphical representation of our model in Fig. (\ref{fig:long}). 

\section{Experiments and results}
\label{sec:experiments}
In this section, we perform experiments on various data sets to evaluate the proposed techniques and compare it to other baseline methods. For all the experiments, we set $T$ = 20 and $\beta=40$; and initialize uniformly weighted 
$w_{i}$ and projection matrix $\mathcal{P}$ with LDA. 
\subsection{Classification on toy data sets}
The MUSK data sets{\footnote{www.cs.columbia.edu/~andrews/mil/datasets.html}} are the benchmark data sets used in virtually all previous approaches and have been described in detail in the landmark paper \cite{Dietterich97}. Both data sets, MUSK1 and MUSK2, consist of descriptions of molecules using multiple low-energy conformations. Each conformation is represented by a 166-dimensional feature vector derived from surface properties. MUSK1 contains on average approximately 6 conformation per molecule, while MUSK2 has on average more than 60 conformations in each bag. 
The Corel data set consists three different categories (ÒelephantÓ, ÒfoxÓ, ÒtigerÓ) , and each instance is represented with 230 dimension features, characterized by color, texture and shape descriptors. The data sets have 100 positive and 100 negative example images. The latter have been randomly drawn from a pool of photos of other animals. We first use PCA reduce its dimension into 40 for our method. For parameter setting, we set $K$=3, $T$ = 20 and $N=4$ (namely the 4-Nearest-Neighbor (4NN) algorithm is applied for classification). The averaged results of ten 10-fold cross-validation runs are summarized in Table (\ref{tab:tab1}). We set LDA{\footnote{we use the bag label as the instance label to test its performance}} and MI-SVM as our baseline. We can observe that both LFDA and JLFDA outperform MI-SVM on MUSK1 and Fox data sets, while has comparative performance as MI-SVM on the others. 
\begin {table}[tp!] 
\begin{center}
    \begin{tabular}{| c | c | c | c | c | c |}
    \hline
     Data set &inst/Dim & MI-SVM & LDA & LFDA & JLFDA \\ \hline
    MUSK1 & 476/166& 77.9  & 70.4 & 81.4 &  87.1 \\ \hline
    MUSK2 & 6598/166& 84.3 & 51.8 & 76.4 &  81.3 \\ \hline
    \hline
    Elephant & 1391/230 &81.4   &  70.5 & 74.5 &  79.0  \\ \hline
Fox & 1320/230 &57.8 & 53.5 & 61.5 & 59.5  \\ \hline
Tiger & 1220/230 & 84.0 & 71.5 & 74.0 & 80.5  \\ \hline

    \end{tabular}
\end{center}
    \caption{Accuracy results for various methods on MUSK and Corel data sets. Our approach outperform LDA on both datasets, and we get better result than MI-SVM on MUSK1 and Fox data set.}
    \label{tab:tab1}
\end {table}
\subsection{Semantic keyframe extraction}
We conduct experiments on the challenging TRECVID MED11 
dataset{\footnote{http://www.nist.gov/itl/iad/mig/med11.cfm}}. It 
contains five events: attempting a board trick
feeding an animal, landing a fish, wedding ceremony and working on a 
woodworking project. All of five events consist of a number of human 
actions, processes, and activities interacting with other people 
and/or objects under different place and time. At this moment, we take 
105 videos from 5 events for testing and the remaining 710 videos for 
training.  For parameters, we set $K=10$ and $N=10$. We learned the representative clusters for each class, and 
then use them to find semantic frames in videos with the same labels. 
Then we evaluation the semantic frames for each video through 
human-factors analysis---the semantic keyframe extraction problem 
demands a human-in-the-loop for evaluation.  We explain our human 
factors experiment in full detail in experiment setup. Our 
ultimate findings demonstrate that our proposed latent FDA with prior 
model is most capable of extraction semantically meaningful keyframes 
among latent FDA and competitive baselines.

\textbf{Video representation.}
For all videos, we extract HOG3D descriptors \cite{Klaser08} every 25 frames (about sampling a frame per second).
To represent videos using local features we apply a bag-of-words 
model, using all detected points and a codebook with 1000 elements.

\textbf{Benchmark methods.}
We make use of SVM as the benchmark method in the experiment. We take the one-vs-all strategy to train a linear MI-SVM classifier using $SVM^{light}$ \cite{Joachims09}, which is very fast in linear time, for each kind of event. Then we choose 10 frames for each video which are far from the margin and close to the margin on positive side. For the frames chosen farthest away from the margin, we refer it SVM(1), while for frames closest to the margin we refer it SVM(2). We also randomly select 10 frames from each video, and we refer it RAND in our experiments. 

\textbf{Experiment setup}
Ten highly motivated graduate students (range from 22 to 30 years' 
old) served as subjects in all of the following human-in-the-loop 
experiments. Each novel subject to the  annotation-task paradigm 
underwent a training process. Two of the authors gave a detailed 
description about the dataset and problem, including its background, 
definition and its purpose. In order to indicate what representative 
and discriminative means for each event, the two authors showed videos 
for each kind of event to the subjects, and make sure all subjects 
understand what semantic keyframes are.
The training procedure was terminated after the subject's 
performance had stabilized. 
We take a pairwise ranking strategy for our evaluation. We extract 10 
frames per video for 5 different methods (SVM(1), SVM(2), LFDA, 
JLFDA and RAND) respectively. For each video, we had about 1000 image 
pairs for comparison. We had developed an interface 
using Matlab to display two image pair and three options (Yes, No and 
Equal) to compare an image pair each time. The students are taught how 
to use the software; a trial requires them to give a ranking: If the left is 
better than the right, then choose 'Yes'; if the right is better than 
the left, choose 'No'. If the two image pair are same, then choose 
'Equal'. The subjects are again informed that better means a better 
semantic keyframe.  The ten subjects each installed the software to their 
computers, and conducted the image pair comparison independently. In 
order to speed up the annotation process, the interface can randomly 
sample 200 pairs from the total 1000 image pairs for each video, and 
we also ask subjects to random choose 10 videos from the test dataset.

\textbf{Experimental Results}
We have scores for each image pair. By sampling 10 videos from each 
event, we at last had annotations of 104 videos. It means our sampling 
videos got from 10 subjects almost cover all test data (105 videos).  
Table (\ref{tab:tab11}) shows the win-loss matrix between five methods by 
counting the pairwise comparison results on all 5 events. It shows 
that JLFDA and LFDA always beat the three baseline methods.  
Furthermore, JLFDA is better than LFDA because it considers a prior 
distribution from training data, which will help JLFDA to find more 
representative frames. See Fig. (\ref{fig:Fig10}) for keyframes extracted with JLFDA. 

\begin{minipage}{\textwidth}
  \begin{minipage}[b]{0.49\textwidth}
    \centering
    \includegraphics[trim = 35mm 88mm 30mm 85mm,clip=true, width=7.0cm]{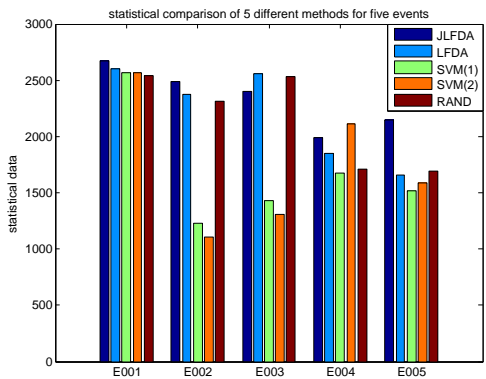}
    \captionof{figure}{Comparison of 5 methods for five events. Higher value, better performance.}
    \label{fig:Fig9}
  \end{minipage}
  \hfill 
  \begin{minipage}[b]{0.49\textwidth}
    \centering
\renewcommand{\arraystretch}{1.8}
\resizebox{7cm}{!} {
\begin{tabular}{|p{0.8cm} | p{0.8cm} p{0.8cm} p{0.8cm} p{1cm} p{1cm} |}
\hline
\multicolumn{1}{|l|}{\multirow{2}{*}{Method}}& \multicolumn{5}{c|}{Win-Loss matrix} \\
\cline{2-6}
& JLFDA & FLDA & RAND & SVM(1) & SVM(2)\\
\cline{1-6}
JLFDA & - & 3413 &  2274 & 2257 & 3758\\
LFDA & 2957 & - & 2309 & 2230 & 3554\\
RAND & 2111 & 2175 & - & 1861 & 2274\\
SVM(1) & 2088 & 2270 & 2010 & - & 2314\\
SVM(2) & 3232 & 3316 & 2113 & 2125 & -\\
\hline
\end{tabular}
}
\captionof{table}{Win-Loss matrix for five methods. It represents how many times methods in each row win methods in column.}
\label{tab:tab11}
    \end{minipage}
  \end{minipage}

We compared the five methods on the basis of Condorcet voting method. We treat 'Yes', 'No' and 'Equal' are voters for each method in the image pairwise comparison process. If 'Yes', we cast one ballot to the left method; else if 'No', we add a ballot to the right method; else do nothing to the two methods. Fig. (\ref{fig:Fig9}) shows ballots for each method on each event. It demonstrates our method JLFDA always beat other methods, except for E004 dataset.
We also compared the five methods based on Elo rating system. For each video, we ranked the five methods according to Elo ranking system. Then, we counted the number of No.1 methods in each event. The results in Table (\ref{Tab2}) show that our method is better than others, except E004. Such results based on Elo ranking is consistent with Condorcet ranking method in Fig. (\ref{fig:Fig9}). E004 is the wedding ceremony event and our method is consistently 
outperformed by the SVM baseline method.  We believe this is due to 
the distinct nature of the E004 videos in which the video scene 
context itself distinguishes it from the other four events (the 
wedding ceremonies typically have very many people and are inside).  
Hence the discriminative component of the methods are taking over, and 
the SVM is able to outperform the Fisher discriminant.  This effect 
seems more likely due to the nature of the five events in the data set 
than the proposed method intrinsically.

\begin{table}[ht!]
\begin{center}
\begin{tabular}{lcccccc}
\hline
\multicolumn{1}{l}{\multirow{2}{*}{Method}}& \multicolumn{5}{c}{the number of No.1 method in each event} \\
\cline{2-6}
& E001 & E002 & E003 & E004 & E005\\
\cline{1-6}
JLFDA & 6 & 7 &  7 & 3 & 7\\
LFDA & 6 & 4 & 4 & 5 & 1\\
SVM(1) & 4 & 4 & 4 & 2 & 4\\
SVM(2) & 6 & 3 & 1 & 7 & 6\\
RAND & 2 & 2 & 4 & 3 & 2\\
\hline
\end{tabular}
\caption{For each video, we ranked the five methods according to Elo 
ranking system. Then, we counted the number of No.1 method one video 
level in each event. For example, E002 has total 20 videos, and JLFDA 
has rank first on 7 videos, while RAND has rank first on only two 
videos. Higher value, better results. It demonstrates that our method 
is more capable at extracting semantically meaningful keyframes.}
\label{Tab2}
\vspace{-7mm}
\end{center}
\end{table}

\begin{figure*}[ht]
\centering
\begin{tabular}{c}
\includegraphics[trim = 10mm 110mm 10mm 110mm,clip=true, width=13.1cm]{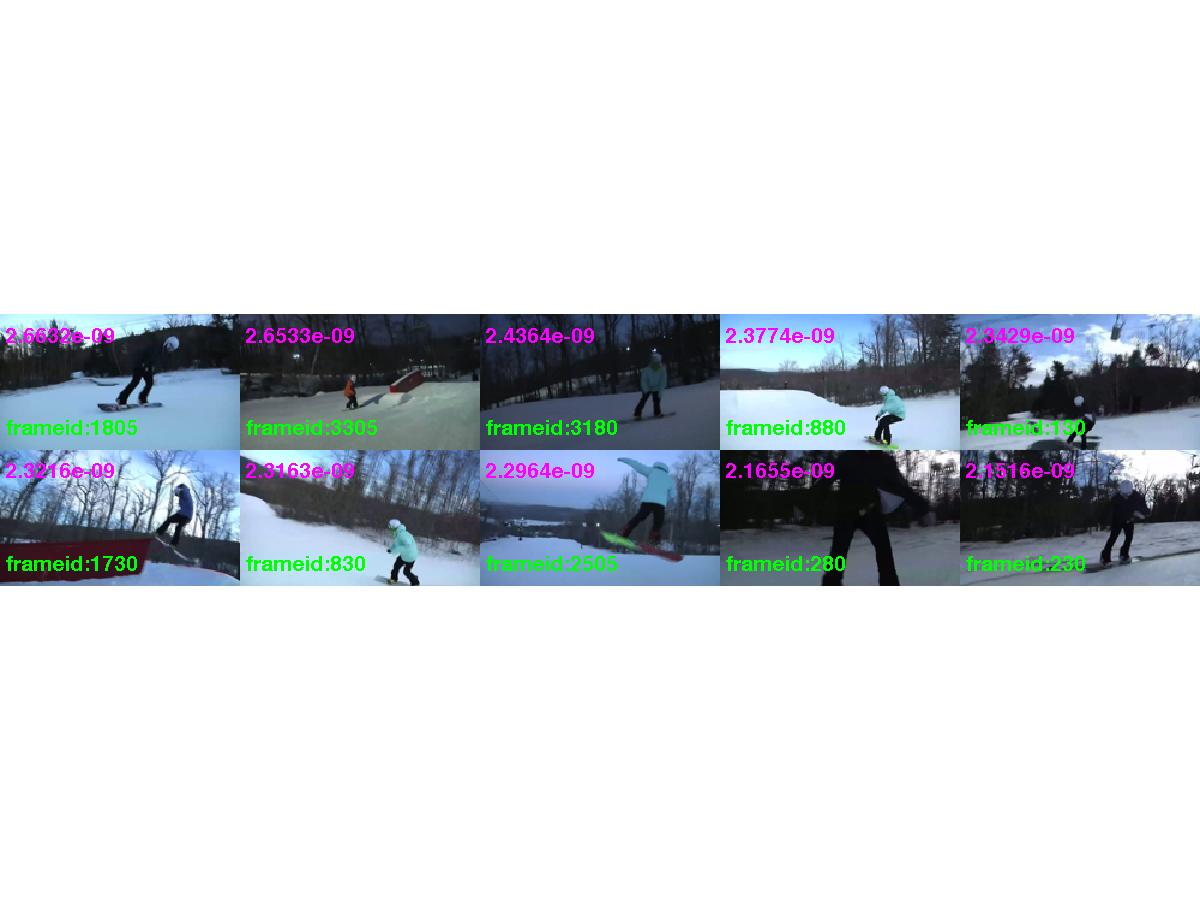}\\
\includegraphics[trim = 10mm 110mm 20mm 110mm,clip=true, width=13.1cm]{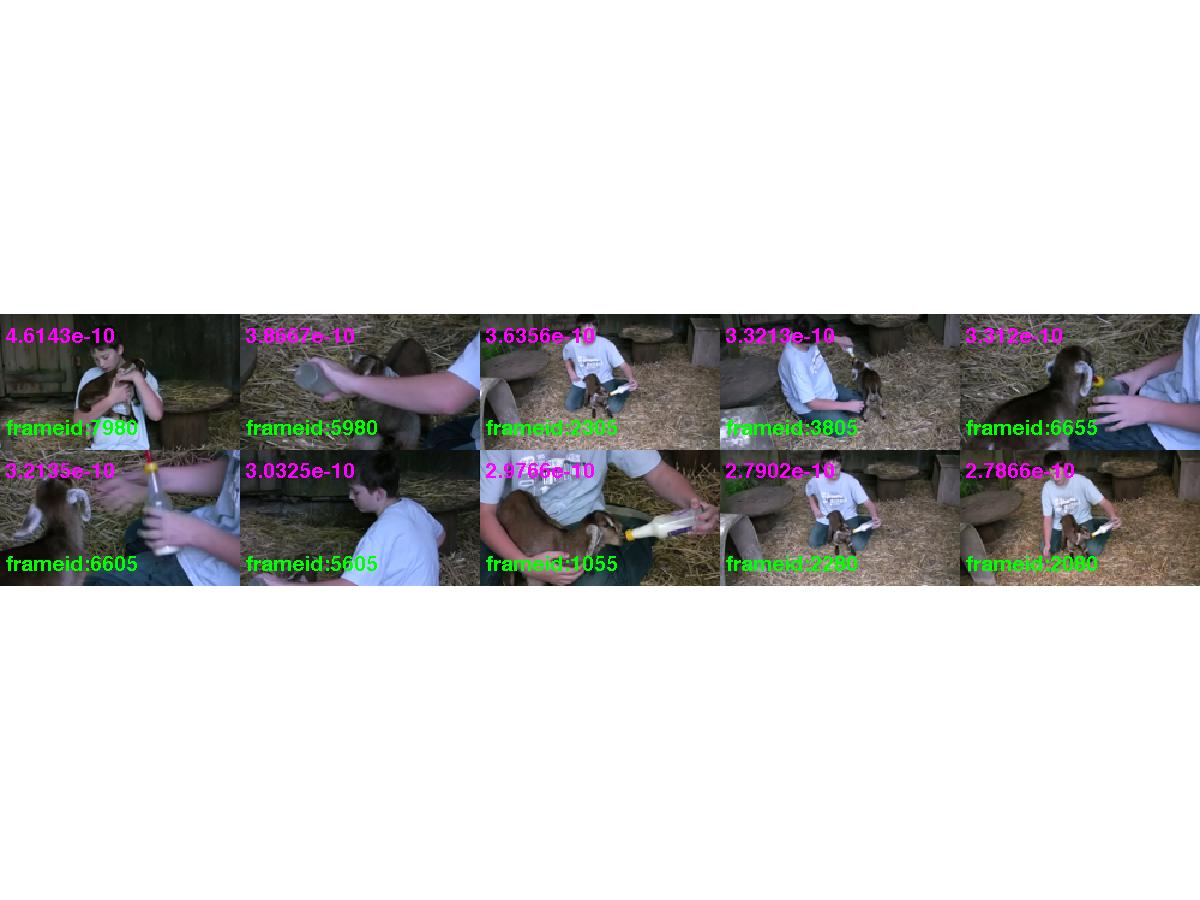}\\
\includegraphics[trim = 10mm 110mm 20mm 110mm,clip=true, width=13.1cm]{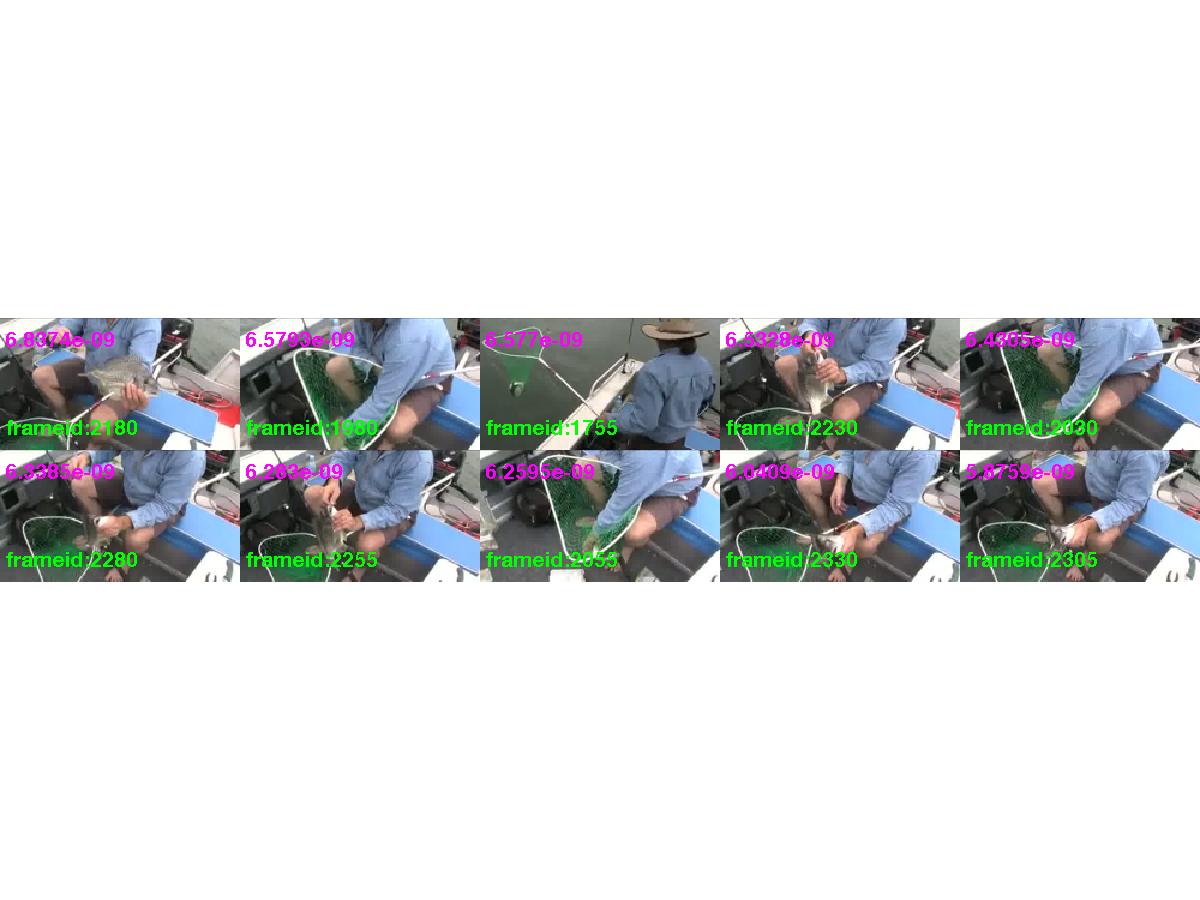}\\
\includegraphics[trim = 10mm 110mm 20mm 100mm,clip=true, width=13.1cm]{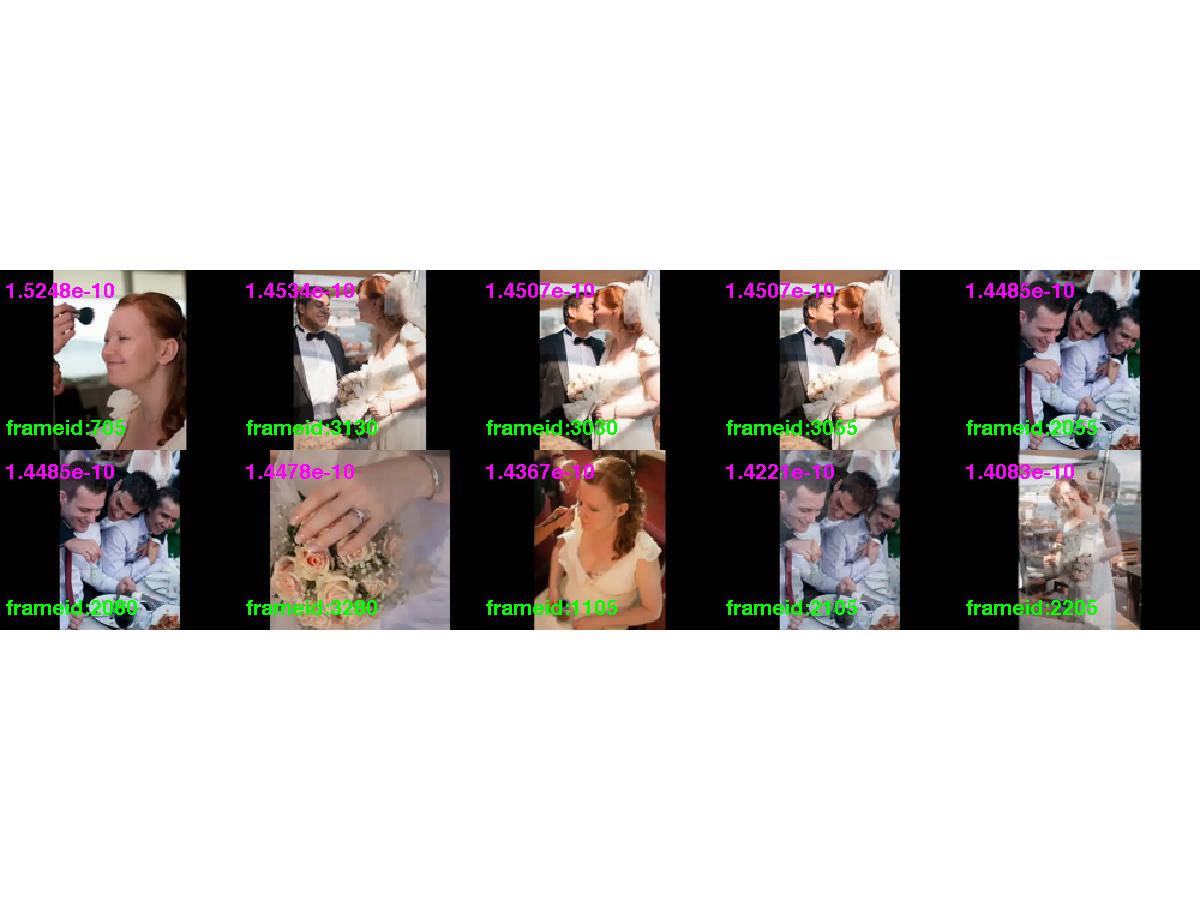}\\
\includegraphics[trim = 10mm 110mm 20mm 100mm,clip=true, width=13.1cm]{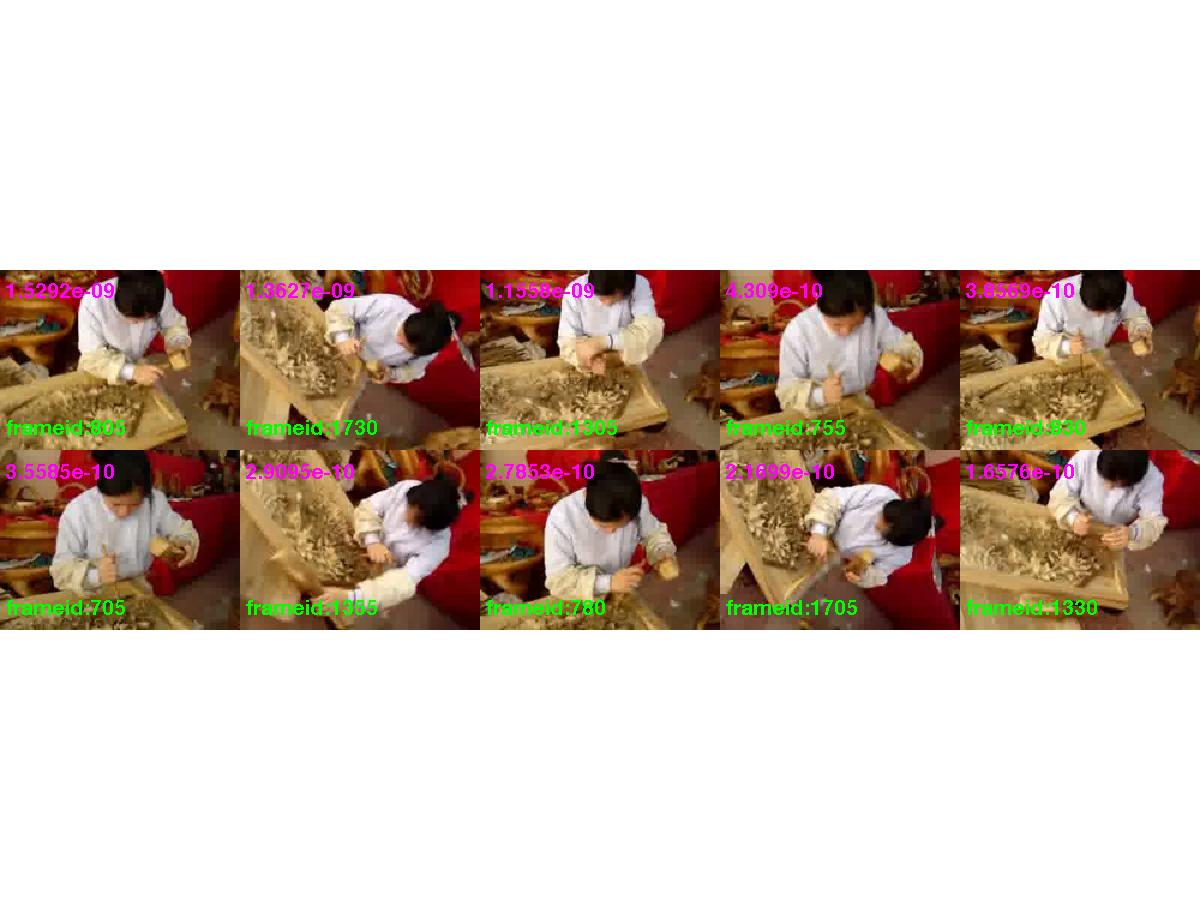}
\end{tabular}
\caption{Sample keyframes from the first five events (each row from top to down): (a) snowboard trick, (b) feeding animal, (c) fishing, (d) marriage ceremony, and (e) wood making. Each row indicates sample results from the same videos for each event. It shows that our method can extract the representative and discriminative images for each kind of events. In other words, we can decide what's happen when we scan the images.}
\label{fig:Fig10}
\end{figure*}

\section{Conclusion}
In this paper, we have presented a latent Fisher discriminant analysis 
model, which combines the latent variable inference and dimension reduction in an unified framework. Ongoing work will extend the kernel trick into the model. We test our method on classification and semantic keyframe extraction problem, and yield quite competitive results. To the best 
of our knowledge, this is the first paper to study the extraction of 
semantically representative and discriminative keyframes---most 
keyframe extraction and video summarization focus on representation 
summaries rather than jointly representative and discriminative ones.  
We have conducted a thorough ranking-based human factors experiment 
for the semantic keyframe extraction on the challenging TRECVID MED11 
data set and found that our proposed methods are able to consistently 
outperform competitive baselines.


{\small
\bibliographystyle{ieee}
\bibliography{videoegbib}
}

\end{document}